\documentclass[12pt, authoryear]{article}

\usepackage{graphicx}
\usepackage{epsfig}
\usepackage{amssymb}
\usepackage{amsmath, amsthm}
\usepackage{soul}
\usepackage{xcolor}
\usepackage{accents}

\usepackage{algorithm}
\usepackage{algorithmic}
\usepackage{enumitem}
\usepackage{multicol}
\usepackage{indentfirst}
\usepackage{breqn}
\usepackage{natbib}

\graphicspath{{figdir}}

\newcommand{\bb}{\mathbb}

\newcommand{\R}{\bb R}

\newcommand{\ind}{{\mathbf{1}\,}}

\newcommand{\cs}{\mathcal{C}}
\newcommand{\fs}{\mathcal{F}}
\newcommand{\xs}{\mathcal{X}}
\newcommand{\ys}{\mathcal{Y}}

\newcommand{\is}{\mathcal{I}}

\newcommand{\tx}{\tilde{x}}
\newcommand{\hx}{\hat{x}}
\newcommand{\txs}{\tilde{\xs}}
\newcommand{\hxs}{\hat{\xs}}

\newcommand{\probP}{\text{I\kern-0.15em P}}

\newtheorem{proposition}{Proposition}

\newcommand{\RN}[1]{%
  \textup{\uppercase\expandafter{\romannumeral#1}}%
}

\title{Neural Network Models for\\ 
Contextual Regression}

\author{Seksan Kiatsupaibul
\thanks{Department of Statistics, Chulalongkorn University, Bangkok 10330, Thailand}
\and
Pakawan Chansiripas\footnotemark[1]
}
 
\begin{document}

\maketitle


\begin{abstract}
We propose a neural network model for contextual regression in which the regression model depends on contextual features that determine the active submodel and an algorithm to fit the model. The proposed simple contextual neural network (SCtxtNN) separates context identification from context-specific regression, resulting in a structured and interpretable architecture with fewer parameters than a fully connected feed-forward network.
We show mathematically that the proposed architecture is sufficient to represent contextual linear regression models using only standard neural network components.
Numerical experiments are provided to support the theoretical result, showing that the proposed model achieves lower excess mean squared error and more stable performance than feed-forward neural networks with comparable numbers of parameters, while larger networks improve accuracy only at the cost of increased complexity. The results suggest that incorporating contextual structure can improve model efficiency while preserving interpretability.

\flushleft {\it{Keywords}}: Contextual regression, game theory, fictitious play, best response dynamics
\end{abstract}


\section{Introduction}\label{s:intro}

Contextual regression concerns a family of regression models that are applied to different contexts that belongs to a common problem.  
Technically, a contextual regression model is a regression model whose parameters are functions of the explanatory variables or features.  
We are interested in the setting where the contextual features, 
the subset of features that identifies the contexts, are known.  
In addition, we assume that there are a finite number of contexts, and the contextual features cannot readily be 
transformed into indicator variables that can completely identify all of the contexts.  
Under this setting, even when the sub-model given a context is a linear regression model, 
the overall model is non-linear.  We propose an interpretable neural network architecture for 
a contextual regression problem and a method to fit the model.

The applications of the contextual regression can be numerously found in personalized product and services.  
In personalized medicine, pulse oximetry provides a vivid example of how a contextual regression model can be applied.  
A pulse oximeter determines the amount of oxygen in blood by sending a beam of light through a person's skin, 
measuring the light absorption signal and converting the signal to the blood oxygen level via a regression model.  
A standard pulse oximeter employs a single regression model for all skin shades.  However, there is  
a growing body of evidence that the accuracy of the oxygen measurement is compromised when the device 
is applied to a darker skin \citep{Sjoding2020}.  A better design is to adopt a contextual regression model for 
the oximetry, taking the skin shade as the context.  Note that the shade of the skin can also be determined 
from signals extracted from the light beam.  These skin shade signals can be taken as the contextual features.

When the sub-model given a context is a linear regression model, 
a contextual regression that is a linear model can be constructed when the contextual features are 
indicator variables.  In that case, by including the cross terms between the indicator contextual features and the other 
features, the resulting linear regression model can be served as a contextual regression model.  
Model fitting and analysis can be easily performed following the standard linear modeling methodology.  
The problem becomes complicated when the contextual variables are continuous variables, which is the case 
we consider here.  Assuming there are a finite number of contexts, the model is required to be 
able to classify the observations into contexts based on the values of the contextual features and, subsequently, 
to be able to predict the values of the responses based on the values of other features.  
In this case, a context regression problem becomes a combination of a classification problem and a regression problem.

A neural network is a unified framework for regression and classification modeling that can accommodate 
both linear and non-linear models.  Therefore, it is an ideal choice for modeling a contextual regression problem, 
which can be regarded as a combination of classification and regression.  However, the fully connected neural network model, 
the so-called feed-forward network, is over-parameterized and is prone to over-fitting.  
We propose an interpretable neural network model, called the contextual neural network (CtxtNN), 
that contains considerably lower number of parameters than that of the feed-forward neural network.  
We will show that CtxtNN is sufficient for modeling the contextual linear regression problems.  
We also introduce a model fitting algorithm that improves the chance of attaining a global optimal set of parameters for 
the contextual neural network.

The organization of the paper is as follows.  In Section~\ref{s:method}, we mathematically express 
the contextual regression problem under study and define CtxtNN for modeling the problem.  
The proposed neural network model is shown to be sufficient in some important cases.  
In Section~\ref{s:numerical}, we compare the performances between the proposed algorithm 
and the standard stochastic gradient when applied to various cases of CtxtNN.
A conclusion is provided in Section~\ref{s:conclusion}

\section{Methodology}\label{s:method}

We first describe the contextual regression problem to be addressed. 
Let $\xs = \R^p, p\geq 2$ and $\ys = \R$ be the feature space and the output space, respectively.  
Contexts are defined by the contextual features that are the elements of 
the contextual space $\txs = \R^q$, $0 < q < p$, a projection of the feature space $\xs$.  
Complement to the contextual space $\txs$ is the regressor space $\hxs=\R^{p-q} = \R^r$ 
where $r=p-q$ and $\xs = \hxs\times\txs = \R^r\times\R^q$.
For a feature vector $x=[x_1,\ldots,x_p]^\top \in\xs$, 
the vector of the first $r$ and the last $q$ elements, $\hx=[x_1, \ldots, x_r]^\top\in\hxs$ and $\tx=[x_{r+1},\ldots, x_p]^\top \in\txs$, 
represent the corresponding regressor feature vector and contextual feature vector, respectively.   
Therefore, 
\[x=[\hx, \tx]^\top.\]
We assume that there is a finite (small) number of contexts, 
indexed by $i\in\{1,\ldots,c\}=\is$.  
The contextual space $\txs$ is partitioned into context regions $\{\cs^1, \ldots \cs^c\}$, 
and $x\in\xs$ is said to belong to context $i, i\in\is$ if 
the corresponding contextual feature vector $\tx$ is in $\cs^i$.
For a contextual regression model, we assume there are a family $\fs$ of functions $\{f^1, \ldots, f^c\}$, each 
$f^i:\hxs\to\ys$ where $i\in\is$, and a regression function $f$ that is defined by
\begin{equation}\label{eqn:regfn0}
f(x) = f^i(\hx), \quad\text{if } \tx\in\cs^i, i\in\is.
\end{equation}
We impose the following structure on the problem.  We assume that there is a score function $g:\txs\to\R$ that 
map each context region $\cs^i$ to a \emph{connected} interval $I^i\subseteq \R$, $i\in\is$ such that $\{I^1, \ldots, I^c\}$ 
form a partition of $\R$.  Therefore, the regression function can be as well defined by
\begin{equation}\label{eqn:regfn1}
f(x) = f^i(\hx), \quad\text{if } g(\tx)\in I^i, i\in\is.
\end{equation}

\subsection{Simple Contextual Neural Network}\label{s:sctxtnn}
A simple yet important sub-class of the contextual regression models is the simple contextual linear regression model.  
It will serve as a building block for other contextual regression models in subsequent sections.
The simple contextual linear regression is a contextual regression model where all $f^i$, $i\in\is$, are linear functions
and there is one contextual feature, i.e., $q=1$, $r=p-1$ and the score function is the identity function $g(x)=x$.  
That is the contextual space $\R$ is partitioned into $c$ connected intervals, $\{I^1, \ldots, I^c\}$ 
and $x=[x_1,\ldots, x_p]^\top = [\hx, x_p]^\top$ is said to be in context $j$ if $x_p\in I^j$.  
By convention, let $I^j, j=1,\ldots c$ be the left-closed and right-open intervals.
The regression function 
is of the form
\begin{equation}
f(x)=\beta_0^j + \beta^j\cdot \hx, \quad\text{if }x_p\in I^j,
\end{equation}
where $\beta^i = ([\beta_1, \ldots,\beta_{r}]^j)^\top \in\R^{r}$ is the coefficient vector , 
the term $\beta_0^j\in\R$ is the bias term (or the intercept), and $\beta^i\cdot \hx$ is the 
dot product between $\beta^j$ and $\hx$.

We propose the following neural network model, called the simple contextual neural network (SCtxtNN), 
for the simple contextual linear regression.  The model can be depicted by the network diagram in Figure~\ref{fig:sctxtnn}.
The architecture of SCtxtNN is designed to employ only typical neural network components 
without any customized components, i.e., it contains only the linear components, ReLU (rectified linear unit) and 
the perceptron components.

\begin{figure}[h!]
\begin{center}
\includegraphics[scale=0.5]{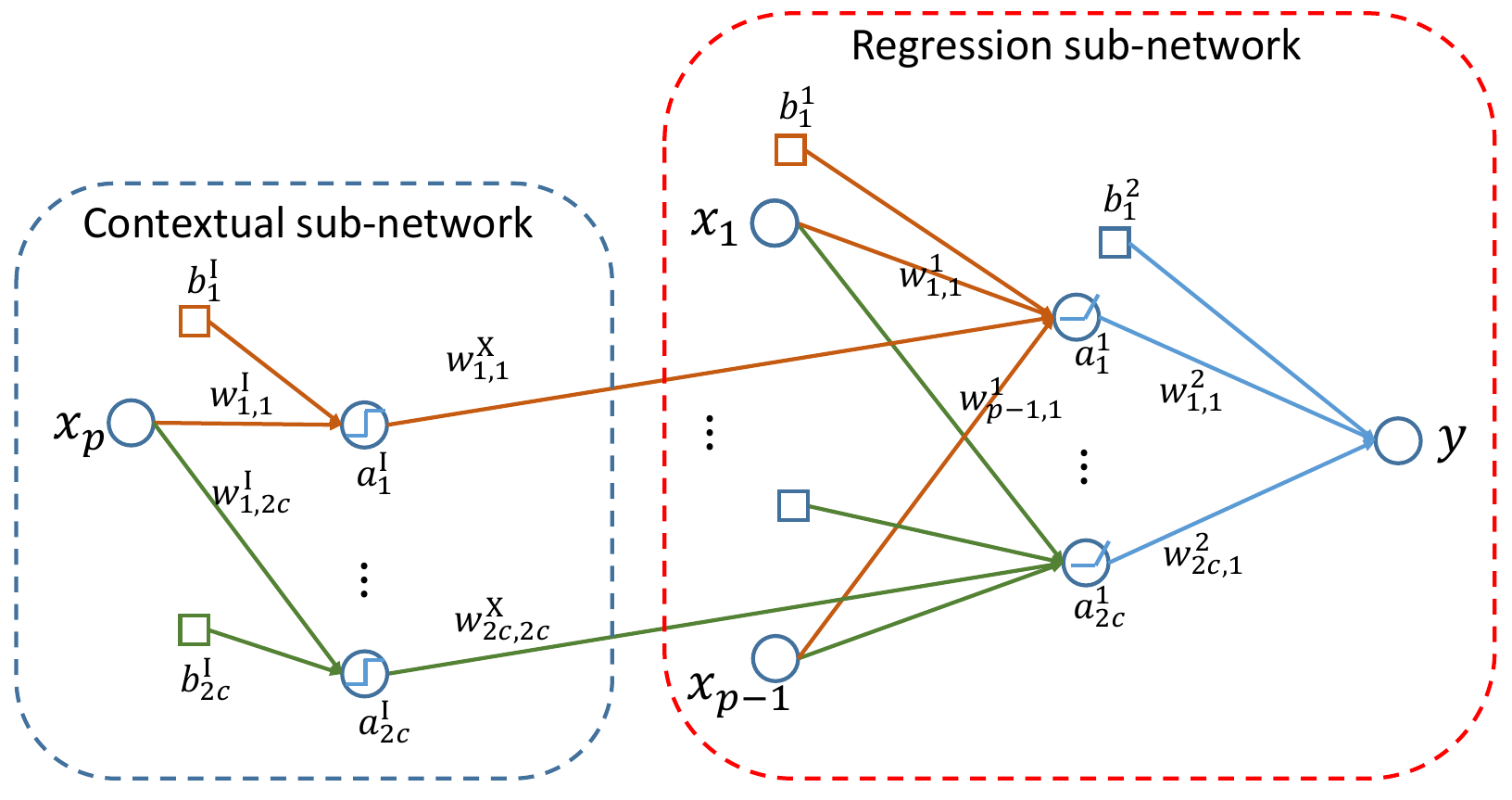}
\caption{Simple contextual neural network (SCtxtNN) architecture.}\label{fig:sctxtnn}
\end{center}
\end{figure}

For a neural network, we adopt the following notations.  
We are given $n$ training feature vectors, where 
$x$ is a specific feature vector, $y(x)$ is the desired output for example $x$, 
$\ell$ is the index of network layers of neurons or units, and 
$a^\ell(x)$ is the vector of activations output for input (feature or transformed feature) vector $x$.  
Let $L$ be the number of the network layers excluding the input layer.
\begin{enumerate}
  \item $w^\ell_{kj}$ is the {\em weight} of the $k^{\mbox {th}}$ {\em unit} in the $(\ell-1)^{\mbox {st}}$ layer applied to the $j^{\mbox {th}}$  unit in the $\ell^{\mbox {th}}$  layer,  $k=1,2,\ldots, m_{\ell-1}, j=1,2,\ldots, m_\ell, \ell=1,\ldots, L$.  
By convention, $l=0$ represents the input layer and $l=L$ represents the output layer.
  \item $b^\ell_j$ is the {\em bias} of the $j{\mbox {th}}$ unit in the $\ell{\mbox {th}}$ layer, $ j=1,2,\ldots, m_\ell, \ell=1,\ldots, L$.
  \item $a^\ell_j$ is the {\em activation} of the $j{\mbox {th}}$ unit in the $\ell{\mbox {th}}$ layer, $ j=1,2,\ldots, m_\ell, \ell=1,\ldots, L$.
\end{enumerate}

The network SCtxtNN consists of two sub-networks, the contextual sub-network and the regression sub-network.  
The contextual sub-network has only $L=1$ layer, i.e. the output layer indexed by $\RN{1}$. 
The input contains one contextual feature $x_p$.  
The output layer $\RN{1}$ contains $2c$ units, each unit is a \emph{perceptron} unit of the linear function 
of the contextual input features.  Therefore, there are $2c$ output $a^{\RN{1}}_j, j=1,\ldots, 2c$ 
from the output layer where 
\begin{equation}\label{eqn:perceptron}
a^{\RN{1}}_j = \tilde{a}(w_{1,j}^{\RN{1}} x_p + b^{\RN{1}}_j), \quad j=1,\ldots, 2c,
\end{equation}
and $\tilde{a}(z) = \ind_{\R^+}(z) $ is the the indicator function over the set of positive real numbers.

The regression sub-network has $L=2$ layers, which include one hidden layer $\ell=1$ and the output layer $\ell=2=L$.  
The input contains $r=p-1$ features, $x_i, i=1,\ldots,r$.
The hidden layer $\ell=1$ contains $2c$ units.  
For $j=1,\ldots, 2c$, the output of unit $j$ of layer $\ell=1$ 
is a \emph{ReLU} activation of the linear function of the inputs \emph{plus the output 
from the contextual subnetwork}, i.e.,
\begin{equation}
a^1_j = a\left(\sum_{k=1}^{r} w^1_{k,j}x_k + b^1_j + w^{\RN{10}}_{j,j}a^{\RN{1}}_j\right), \quad j=1,\ldots, 2c,
\end{equation}
where $a(z) = \max(0,z)$ is the ReLU activation function.
The output layer $l=2$ contains only one unit that represents the output.  This output is a linear function of 
the activation unit from the preceding layer, i.e.,
\begin{equation}
y = a^2_1 = \sum_{k=1}^{2c} w^2_{k,1}a^1_k + b^2_1.
\end{equation}

\begin{proposition}
Let $S\subset\hxs$ be a compact subset of the regressor space and let $T$ be a bounded interval in $\R$ such that 
$T\cap I_j \neq \emptyset$ for $j=1,\ldots,c$.  
For a simple contextual linear regression model, 
there exists SCtxtNN with output $y$ such that
\begin{equation}
y(x) = f(x)\quad\text{for all } x\in S\times T.
\end{equation}
\end{proposition}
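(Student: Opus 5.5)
Write the intervals as $I^j=[t_{j-1},t_j)$ with $-\infty=t_0<t_1<\cdots<t_{c-1}<t_c=\infty$. The plan is to give each context $j$ two hidden ReLU units. One unit reproduces $\beta_0^j+\beta^j\cdot\hx$ and the other its negative. Both are active only when $x_p\in I^j$, and this switching comes from the perceptron outputs together with a large negative offset. Since $S$ is compact, $\beta_0^j+\beta^j\cdot\hx$ is bounded on $S$. Choose $M>0$ with $|\beta_0^j+\beta^j\cdot\hx|<M$ for all $\hx\in S$ and all $j$.

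\textbf{Perceptrons.} The indicator $\ind_{\R^+}$ gives strict thresholds, so $\ind(x_p>t)$ is available directly, and so is $\ind(x_p<t)$ by taking weight $-1$ and bias $t$. The indicator $\ind(x_p\in[t_{j-1},t_j))$ is not a single perceptron, and the architecture gives each hidden unit only one perceptron input ($w^{\RN{10}}_{j,j}$ is diagonal). I will therefore not build the indicator of $I^j$ inside one unit. Instead I use a telescoping sum over thresholds.

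\textbf{Telescoping construction.} Set $g_j(\hx)=\beta_0^j+\beta^j\cdot\hx$ and $d_j=g_j-g_{j-1}$ for $j\ge 2$, with $d_1=g_1$. Then $f(x)=\sum_{j=1}^{c} d_j(\hx)\,\ind(x_p\ge t_{j-1})$. For $j=1$ the indicator is identically $1$. Each product $d_j\ind(\cdot)$ is realized on $S$ by a pair of ReLU units:
\[
d\cdot\ind = a\bigl(d-2M'(1-\ind)\bigr)-a\bigl(-d-2M'(1-\ind)\bigr),
\]
where $M'$ bounds $|d_j|$ on $S$. The term $-2M'(1-\ind)$ is the bias $b^1=-2M'$ plus the perceptron input weighted by $w^{\RN{10}}=2M'$. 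This uses exactly $2c$ hidden units and $2c$ perceptrons, matching the architecture. For $j=1$ the perceptron should be constantly $1$ on $T$. Take weight $0$ and positive bias, or simply put the weight $w^{\RN{10}}$ to zero and bias $0$. The output layer uses weights $\pm1$ and $b^2_1=0$.

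\textbf{Main obstacle: closed versus open thresholds.} The convention $I^j$ left-closed means I need $\ind(x_p\ge t_{j-1})$, but a perceptron gives $\ind(x_p>t)$. I will handle this by using the complement instead. Since $\ind(x_p\ge t)=1-\ind(x_p<t)$ and $\ind(x_p<t)=\ind_{\R^+}(-x_p+t)$, the needed term becomes $d\cdot\ind(x_p\ge t)=d\,(1-\ind(x_p<t))$. Write $h=\ind(x_p<t)$. The pair of ReLU units then has arguments $\pm d-2M'h$, so they are silenced exactly when $x_p<t$. This realizes the left-closed indicator exactly, with bias $b^1=0$ and $w^{\RN{10}}=-2M'$.

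It remains to check case by case that for $x_p\in I^k$ the active terms telescope to $g_k(\hx)$. On $S\times T$ each ReLU pair returns either $d_j$ or $0$, because $|d_j|<M'$ ensures $a(d)-a(-d)=d$ while $a(\pm d-2M')=0$. The hypothesis that $T$ meets every $I^j$ is not needed beyond making the statement nontrivial. Boundedness of $T$ is also unneeded, since the indicators are exact.
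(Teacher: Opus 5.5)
Your construction is correct and is essentially the paper's. You write $f$ as a telescoping sum of the differences $d_j=f^j-f^{j-1}$, each carried by a pair of ReLU units with output weights $\pm1$ and switched off by a perceptron equal to $\ind(x_p<t_{j-1})$ times a large negative weight $w^{\RN{10}}_{j,j}$; this is the paper's choice $a^{\RN{1}}_{2j-1}=a^{\RN{1}}_{2j}=\ind(x_p<z_j)$.

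The variations are minor. Your pair $a(d)-a(-d)$ replaces the paper's $a(d+C)-a(C)$ with a positive offset $C$, so you need boundedness of $S$ only for switching units off. You also calibrate the switching weight to $\sup_S|d_j|$, which is the quantity that matters: the paper's displayed constant is written in terms of the individual $\beta^i$ and does not account for the differences together with the offset, so it can be too small.
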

\begin{proof}
By the definition of $T$ and $I^j$, there exist cut point $z_1 < z_2 < \ldots < z_c < z_{c+1}$ such that 
$x\in S\times T$ is in context $j$ if $z_{j} \leq x_p < z_{j+1}$, $j=1,\ldots,c$.    

Consider first the contextual sub-network.  
We set $a^{\RN{1}}_{2j-1}$ and $a^{\RN{1}}_{2j}$ to be the identifier for context $j$ for $j=1,\ldots,c$.  
That is we set $b^{\RN{1}}_{2j}$ and $w^{\RN{1}}_{1,2j}$ so that
\[w_{1,2j}^{\RN{1}} x_p + b^{\RN{1}}_{2j} > 0 \iff x_p < z_j.\]
and set $b^{\RN{1}}_{2j-1} = b^{\RN{1}}_{2j}$ and $w^{\RN{1}}_{1,2j-1} = w^{\RN{1}}_{1,2j}$.
With such parameter setting and that $a^{\RN{1}}_j$ is defined as a perceptron in (\ref{eqn:perceptron}), for $j=1,\ldots, c$,
\begin{equation}\label{eqn:ai}
a^{\RN{1}}_{2j-1} \text{ and } a^{\RN{1}}_{2j} = \left\{
								\begin{array}{ll}
								1, & \text{ if } x_p < z_j \\
								0, & \text{ if } x_p \geq z_j.
								\end{array}
								\right.
\end{equation}
We say that $a^{\RN{1}}_j$ is in on state or off state if it is zero or one, respectively.  
Therefore, when $x_p\in I^j$, i.e., $z_j \leq x_p <z_{j+1}$, 
we have $a^{\RN{1}}_{2k-1}$ and $a^{\RN{1}}_{2k}$ in on state for $k=1,\ldots,j$ and in off state for $k=j+1,\ldots,c$.  
We then set $w^{\RN{10}}_{j,j}$, for all $j=1,\ldots,2c$, to a large negative numbers so that, when 
$a^{\RN{1}}_j$ is in off state, it will turn off $a^1_j$ in the regression sub-network, i.e., making $a^1_j=0$.  
In fact, in what follows, we see that it suffices, for $j=1,\ldots,2c$, to set
\begin{equation}\label{eqn:wi}
w^{\RN{10}}_{j,j} = - 2\sup_{i\in\is} \left\{\sup_{\hat{x} \in S} \left|\beta^i\cdot \hat{x}\right| + 2\left|\beta^i_0\right|\right\}.
\end{equation}

Now construct the regression sub-network as follows.  Consider first layer $\ell=2$.  Set 
\begin{eqnarray}
b^2_1 & = & 0 \label{eqn:b20}\\
w^2_{2j-1,1} & = & 1 \label{eqn:w2l2j_1}\\
w^2_{2j,1} & = & -1 \label{eqn:w2l2j}
\end{eqnarray}
According to (\ref{eqn:ai}), (\ref{eqn:wi}), (\ref{eqn:b20}), (\ref{eqn:w2l2j_1}) and (\ref{eqn:w2l2j}), 
if $z_j \leq x_p \leq z_{j+1}$, then
\begin{equation}\label{eqn:y}
y = \sum_{k=1}^j \left(a^1_{2k-1} - a^1_{2k}\right).
\end{equation}

Lastly, we set the parameters for layer $\ell=1$ to give the required results. 
For the first unit of layer $\ell=1$, let
\begin{equation}
w^1_{k,1} = \beta^1_k, \quad\text{for } k=1,\ldots,r.
\end{equation}
The value of $b^1_1$ is set to carry the value of $\beta^1_0$ and to make 
$\sum_{k=1}^r w^1_{k,1}x_k + b^1_1$ be positive whenever $x\in S$.  It suffices to set
\begin{equation}
b^1_1= \beta^1_0 + \left|\beta^1_0\right| + \sup_{\hat{x}\in S}\left\{\left|\beta^1\cdot \hat{x}\right|\right\}.
\end{equation}
For the second unit of layer $\ell=1$, let
\begin{equation}
w_{k,2}^1 = 0, \quad\text{for } k=1,\ldots,r.
\end{equation}
The value of $b^1_2$ is set to offset that of $b^1_1$.
\begin{equation}
b^1_2 =  \left|\beta^1_0\right| + \sup_{\hat{x}\in S}\left\{\left|\beta^1\cdot \hat{x}\right|\right\} = b^1_1-\beta^1_0.
\end{equation}
Therefore, when $z_1 \leq x_p < z_2$, we have $a^{\RN{1}}_1=0$ and $a^{\RN{1}}_2=0$, so
\[
\sum_{k=1}^r w^1_{k,1}x_k + b^1_1 + w^{\RN{10}}_{1,1}a^{\RN{1}}_1 = \beta^1\cdot \hat{x} + b^1_1 > 0.
\]
Therefore,
\[
a^1_1 = \max(0, \beta^1\cdot \hat{x} + b^1_1) = \beta^1\cdot \hat{x} + b^1_1.
\]
Likewise,
\[
\sum_{k=1}^r w^1_{k,2}x_k + b^1_2 + w^{\RN{10}}_{2,2}a^{\RN{1}}_2 = b^1_2 
= b^1_1-\beta^1_0 > 0.
\]
Therefore,
\[
a^1_2 = \max(0, b^1_1-\beta^1_0 ) = b^1_1-\beta^1_0.
\]
Hence, 
\begin{equation}\label{eqn:a1a2}
a^1_1 - a^1_2  = \beta^1_0 + \beta^1\cdot\hat{x} = f^1(\hat{x}).
\end{equation}

For units $2j-1$ for $j=2,\ldots,c$ of layer $\ell=1$ of the regression sub-network, let 
\begin{equation}
w^1_{k,2j-1} = \beta^j_k - \beta^{j-1}_k, \quad\text{for } k=1,\ldots,r.
\end{equation}
The value of $b_{2j-1}^1$ for $j=2,\ldots,c$, is set to carry value of $\beta^j_0-\beta^{j-1}_0$ and make the linear sum positive.  
\begin{equation}
b^1_{2j-1}= (\beta^j_0 - \beta^{j-1}_0) + \left|\beta^j_0 - \beta^{j-1}_0\right| 
	+ \sup_{\hat{x}\in S}\left\{\left|(\beta^j-\beta^{j-1})\cdot \hat{x}\right|\right\}.
\end{equation}
For unit $2j$ of layer $\ell=2,\ldots,c$ of layer $\ell=1$ of the regression sub-network, let
\begin{equation}
w_{k,2j}^1 = 0, \quad\text{for } k=1,\ldots,r.
\end{equation}
The value of $b^1_{2j}$ is set to offset that of $b^1_{2j-1}$.
\begin{equation}
b^1_{2j} =  \left|\beta^j_0 - \beta^{j-1}_0\right| 
	+ \sup_{\hat{x}\in S}\left\{\left|(\beta^j-\beta^{j-1})\cdot \hat{x}\right|\right\} = b^1_{2j-1} -  (\beta^j_0 - \beta^{j-1}_0).
\end{equation}
Fix $j\in\{2,\ldots,c\}$.  When $z_j \leq x_p < z_{j+1}$, from (\ref{eqn:ai}), we have $a^{\RN{1}}_{2j-1}=0$ and $a^{\RN{1}}_{2j}=0$.  
Therefore,
\[
\sum_{k=1}^r w^1_{k,2j-1}x_k + b^1_{2j-1} + w^{\RN{10}}_{2j-1,2j-1}a^{\RN{1}}_{2j-1}  = \beta^j\cdot \hat{x} + b^1_{2j-1} > 0.
\]
Therefore,
\[
a^1_{2j-1} = \max(0, \beta^j\cdot \hat{x} + b^1_{2j-1}) = \beta^1\cdot \hat{x} + b^1_{2j-1}.
\]
Likewise,
\[
\sum_{k=1}^r w^1_{k,2j}x_k + b^1_{2j} + w^{\RN{10}}_{2j,2j}a^{\RN{1}}_2 = b^1_{2j} 
= b^1_{2j-1}-(\beta^j_0 - \beta^{j-1}_0) > 0.
\]
Therefore,
\[
a^1_{2j} = \max(0, b^1_{2j-1}-(\beta^j_0 - \beta^{j-1}_0) = b^1_{2j-1}-(\beta^j_0 - \beta^{j-1}_0).
\]
Hence, for $j=2,\ldots,c$,
\begin{equation}\label{eqn:aall}
a^1_{2j-1} + a^1_{2j}  = (\beta^j_0 - \beta^{j-1}_0) + (\beta^j - \beta^{j-1})\cdot\hat{x} = f^j(\hat{x})-f^{j-1}(\hat{x}).
\end{equation}

By (\ref{eqn:y}) and (\ref{eqn:a1a2}), when $z_1 \leq x_p < z_{2}$, we have, for $\hat{x}\in S$,
\begin{equation}
y = a^1_1 + a^1_2 = f^1(\hat{x}).
\end{equation}
By (\ref{eqn:y}) and (\ref{eqn:aall}), when $z_{2j-1} \leq x_p < z_{2j}$ for $j=2,\ldots,c$, we have, for $\hat{x}\in S$,
\begin{equation}
y = \sum_{k=1}^j \left(a^1_{2j-1} + a^1_{2j}\right) = f^1(\hat{x}) + \sum_{k=2}^j \left(f^k(\hat{x}) - f^{k-1}(\hat{x})\right) = f^j(\hat{x}).
\end{equation}

\end{proof}


\section{Numerical Study}\label{s:numerical}
In this section, we conduct a numerical experiment to compare the proposed simple contextual neural network (SCtxtNN) with standard feed-forward neural network models. We perform an experiment for the case one context feature and one regression feature, i.e., $p=2$ and $q=1$.  The objective of the experiment is to evaluate the predictive performance of the proposed architecture relative to conventional neural networks with comparable or larger numbers of parameters.

\subsection{Experimental Setup}

We compare three neural network models:
\begin{itemize}

\item \textbf{Simple Contextual Neural Network (SCtxtNN):}
described in Section~\ref{s:sctxtnn} with three contexts. In the implementation, sigmoid activation functions are used in the contextual subnetwork as a smooth approximation of the perceptron units, while ReLU activation is used in the regression subnetwork.

\item \textbf{Small Feed-Forward Neural Network (Small FF):}
A fully connected feed-forward neural network with two hidden layers, each having four hidden units with ReLU activation functions. The architecture is chosen so that the number of parameters is comparable to that of simple contextual neural network.

\item \textbf{Large Feed-Forward Neural Network (Large FF):}
A fully connected feed-forward neural network with two hidden layers, each having six hidden units with ReLU activation functions. The number of hidden units is chosen so that the proposed contextual neural network can be represented as a special case of this architecture, and therefore this model forms a superset of simple contextual neural network.

\end{itemize}

The total number of parameters of each model is shown in Table~\ref{tab:nparam}.

\begin{table}[h]
\centering
\begin{tabular}{l c}
\hline
Model & Number of parameters \\
\hline
Simple contextual neural network & 37 \\
Small feed-forward network & 37 \\
Large feed-forward network & 67 \\
\hline
\end{tabular}
\caption{Total number of parameters for each model}
\label{tab:nparam}
\end{table}

Data are generated according to the model in Section~\ref{s:sctxtnn} with three contexts. The contextual feature $\tilde{x}$ is generated from the uniform distribution on $[-1,1]$ and determines the context index $j \in \{1,2,3\}$ according to the intervals $[-1,-1/3)$, $[-1/3,1/3)$, and $[1/3,1]$. The regressor feature $\hat{x}$ is generated independently from the standard normal distribution.

For each context $j$, the regression coefficient vector $\beta_j$ is generated independently from the standard normal distribution. The response variable $y$ is then generated from the corresponding linear regression model with an additive Gaussian noise term 
$\varepsilon \sim N(0, 0.01^2)$.

For each simulation, a dataset of size $6000$ is generated. The dataset is then split into $1500$ training samples, $1500$ validation samples, and $3000$ test samples. Within each simulation, the same dataset is used for all models, while a new dataset is generated for each simulation.

All models are trained using the Adam optimizer with learning rate $0.001$ and mean squared error (MSE) loss. In each simulation, the three models are trained on the same training and validation sets to ensure a fair comparison. Each model is trained for $20{,}000$ epochs, and the training and validation losses are recorded at each epoch. After training, the mean squared error on the test set is computed for each model. The experiment is repeated for $50$ independent simulations.

\subsection{Results}

\begin{figure}[h!]
\begin{center}
\includegraphics[scale=0.6]{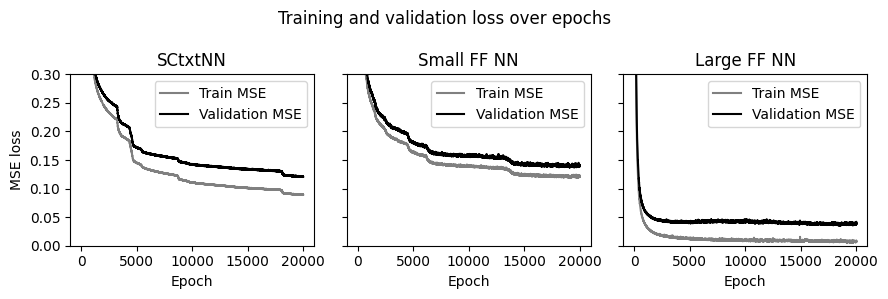}
\caption{Training and validation MSE over epochs for SCtxtNN, Small FF, and Large FF models.}
\label{fig:losscurve}
\end{center}
\end{figure}

Figure \ref{fig:losscurve} presents the training and validation loss across epochs for the three models, with each curve representing the mean of 50 simulations. All models demonstrate a consistent decrease in loss followed by a plateau, indicating successful convergence of the optimization process.

The small feed-forward neural network model shows the smallest gap between training and validation loss, which suggests limited model complexity, but its validation loss remains higher than that of the other models. 
The simple contextual neural network also converges smoothly, with a slightly larger gap between training and validation curves, while achieving lower validation loss than the small feed-forward network. 
The large feed-forward neural network model reaches very small training loss while the validation loss stabilizes. 
Overall, the loss curves mainly reflect differences in model complexity due to the different numbers of parameters and architectures defined in the experimental setup, but they do not by themselves determine the final predictive performance.

\begin{figure}[h!]
\begin{center}
\includegraphics[scale=0.8]{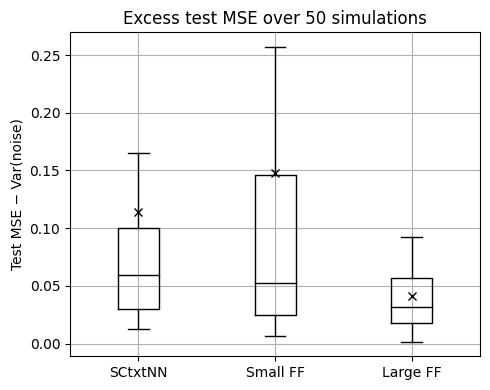}
\caption{Excess test MSE over 50 simulations for SCtxtNN, Small FF, and Large FF. Excess MSE is defined as test MSE minus the noise variance, so that zero corresponds to the optimal achievable error.}
\label{fig:boxplot}
\end{center}
\end{figure}

To compare predictive performance, we evaluate the excess test MSE shown in Figure~\ref{fig:boxplot}. 
The excess MSE is defined as the test mean squared error minus the variance of the noise, so that it measures the estimation error above the optimal predictor. 
With this definition, smaller values indicate better approximation of the true regression function.

The results show that the simple contextual neural network model achieves lower excess MSE than the small feed-forward neural network model. 
While the medians of the two models are similar, the contextual model has a lower mean excess MSE (marked by $\times$ in the figure), suggesting fewer large deviations across simulations.
In addition, its distribution exhibits lower variability, as indicated by the narrower box in the figure, which implies more stable performance. 
Since these two models have comparable numbers of parameters, this difference suggests that the context-based architecture is better aligned with the model structure described in Section~\ref{s:sctxtnn} than a generic fully connected network with similar capacity.

The large feed-forward neural network model achieves the lowest excess MSE overall. 
This behavior is expected because the larger network has substantially higher complexity and can approximate the target function more closely. 
However, this improvement is obtained by increasing the number of parameters rather than by using a model structure derived from the data-generating mechanism.

\section{Conclusion}\label{s:conclusion}

We compared a simple contextual neural network (SCtxtNN) with standard fully connected networks of different sizes. 
The results show that the structured model consistently outperforms a fully connected network with comparable numbers of parameters, while a much larger network can achieve lower error at the cost of substantially increased complexity.

These findings suggest that incorporating contextual structure into the model can improve predictive accuracy while keeping the network simple. 
By separating context identification from context-specific regression, the proposed architecture reflects the form of the data-generating process and can represent the relationship more efficiently than a generic fully connected network with a similar number of parameters.

In addition, the contextual model preserves interpretability, since the components of the network correspond to meaningful parts of the data-generating process, such as context identification and context-specific regression. 
This indicates that using contextual structure can improve model performance while maintaining a simple and interpretable representation.

\bibliographystyle{plainnat}
\bibliography{nnctxtref} 

\end{document}